\definecolor{darkgreen}{rgb}{0.0, 0.5, 0.0}
\newcounter{protocol}%
\newcounter{algorithm saved}%
\newenvironment{protocol}[1][htb]{%
    \renewcommand{\ALG@name}{Function}%
    \setcounter{algorithm saved}{\value{algorithm}} %
    \setcounter{algorithm}{\value{protocol}}%
    \begin{algorithm}[#1]%
    }{\end{algorithm}
    \setcounter{protocol}{\value{algorithm}}%
    \setcounter{algorithm}{\value{algorithm saved}}%
}
\newtheorem{theorem}{Theorem}
\newtheorem{definition}{Definition}
\newtheorem{lemma}{Lemma}
\newcommand{\VSPS}{\textsc{VSPS}}
\newcommand{\rvx}{X} 
\newcommand{\rvy}{Y} 
\newcommand{\rvz}{Z} 
\newcommand{\valx}{x} 
\newcommand{\valy}{y} 
\newcommand{\valz}{z} 
\newcommand{\Itrain}{\mathcal{I}_{\text{train}}}
\newcommand{\Ical}{\mathcal{I}_{\text{cal}}}
\newcommand{\Itest}{\mathcal{I}_{\text{test}}}
\newcommand{\Ival}{\mathcal{I}_{\text{val}}}
\journal{Pattern Recognition}
\begin{document}

\begin{frontmatter}



\title{Volume-Sorted Prediction Set: Efficient Conformal Prediction for Multi-Target Regression} 

\author[sye]{Rui Luo} 
\affiliation[sye]{organization={Department of Systems Engineering, City University of Hong Kong},
            city={Hong Kong SAR},
            country={China}}
\author[mn]{Zhixin Zhou}
\affiliation[mn]{organization={Alpha Benito Research},
            city={Los Angeles},
            country={America}}

\begin{abstract}
We introduce Volume-Sorted Prediction Set (\VSPS{}), a novel method for uncertainty quantification in multi-target regression that uses conditional normalizing flows with conformal calibration. This approach constructs flexible, non-convex predictive regions with guaranteed coverage probabilities, overcoming limitations of traditional methods. By learning a transformation where the conditional distribution of responses follows a known form, \VSPS{} identifies dense regions in the original space using the Jacobian determinant. This enables the creation of prediction regions that adapt to the true underlying distribution, focusing on areas of high probability density. Experimental results demonstrate that \VSPS{} produces smaller, more informative prediction regions while maintaining robust coverage guarantees, enhancing uncertainty modeling in complex, high-dimensional settings.
\end{abstract}



\begin{keyword}
Conformal prediction \sep uncertainty quantification \sep multi-target regression \sep adaptive prediction regions \sep conditional normalizing flows
\end{keyword}

\end{frontmatter}

\section{Introduction}
\label{sec:introduction}

In real-world applications, it is often required to estimate more than one response variable \cite{gonccalves2023regression, patel2024conformal, dheur2025multi}. Consider, for example, estimating the effects and side effects of a drug given the patient's demographic information and medical measurements. These two responses may be correlated in such a way that when the drug is effective, the side effects are more severe~\cite{schuell2005side, nakano2022deep}, and this relation might not be linear.

In high-stakes settings, providing point predictions for the drug's effects and side effects is insufficient; the decision-maker must know the plausible effects for an individual \cite{hua2025mmdg}. The plausible effects can be represented as a region in the multidimensional space that covers a pre-specified proportion (e.g., 90\%) of the drug's possible outcomes. In the one-dimensional case, the region reduces to an interval, determined by lower and upper bounds for the response variable. The problem of constructing such prediction intervals is extensively investigated in the literature~\cite{koenker1978regression, izbicki2019flexible, gupta2022nested}. Extending this approach naively to the multi-target regression by estimating a prediction interval for each response separately results in a rectangle-shaped region. However, the true distribution of the response variables can have an arbitrary, non-convex shape, making the predicted region overly conservative and not reflective of the true underlying uncertainty.

A better approach is to predict both outcome variables jointly, encouraging the model to exclude unlikely combinations of the two from the predicted region. Existing methods, such as Directional Quantile Regression (DQR)~\cite{bovcek2017directional} and Vector Quantile Regression (VQR)~\cite{carlier2016vector}, attempt to address this by modeling joint distributions. However, these methods often impose restrictive assumptions on the shape or structure of the predictive regions, leading to either overly conservative estimates or inflexibility in capturing complex dependencies. Spherically Transformed Directional Quantile Regression (ST-DQR) \cite{feldman2023calibrated} overcomes these limitations by computing the directional quantiles in a latent space trained by a Variational Autoencoder (VAE). Nevertheless, this approach treats each latent sample equally in the transformation, thereby forfeiting some crucial information.


In this work, we develop a novel approach to enhance the performance of multi-target conformal prediction by identifying dense regions of the conditional distribution $\rvy|\rvx$. To achieve this, we train a conditional normalizing flow, denoted as $f_\phi$, such that the transformed variable $\rvz = f_\phi(\rvy, \rvx)$ approximately follows a known distribution, typically a multivariate normal distribution.

In this framework, a unit volume in the transformed latent space $\mathcal{Z}$ around a point $\valz$ corresponds to $\left|\det\left(\frac{\partial f_\phi^{-1}(\valz, \valx)}{\partial \valz}\right)\right|$ units in the original response space $\mathcal{Y}$, where $\frac{\partial f_\phi^{-1}(\valz, \valx)}{\partial \valz}$ represents the Jacobian matrix of the inverse transformation $f_\phi^{-1}$ with respect to $\valz$. By sampling a set of points $\{\valz^m\}_{m=1}^M$ from the latent space and mapping them back to the original space via $\valy^m = f_\phi^{-1}(\valz^m, \valx)$, we observe that points with larger values of $\left|\det\left(\frac{\partial f_\phi(\valy, \valx)}{\partial \valy}\right)\right|$ correspond to regions with larger volumes in $\mathcal{Y}$. Consequently, the prediction set should encompass regions near such $\valy^m$ to improve efficiency and informativeness.

This rationale underpins our proposed prediction set, termed the \textbf{Volume-Sorted Prediction Set} (\VSPS{}). While further implementation details require careful consideration, \VSPS{} leverages this fundamental principle to outperform existing approaches in multi-target prediction tasks. By focusing on volume-sorted regions, \VSPS{} effectively captures the underlying distribution's complexity, resulting in more precise and efficient uncertainty quantification in multi-target regression settings.

\section{Related Work}
\label{sec:related_work}

\subsection{Conformal Prediction for Multi-Target Regression}

Conformal prediction \cite{vovk2005algorithmic, manokhin2022awesome} offers distribution-free coverage guarantees, with wide applications across single-target regression \cite{lei2014distribution, romano2019conformalized,izbicki2019flexible, gupta2022nested, luo2024conformal}, classification \cite{romano2020classification, luo2024game, liu2024c, luo2024entropy, huang2024conformal, luo2024trustworthy, huang2025conformal, zeng2025parametric, tawachi2025multidimensional}, and graph-related tasks \cite{luo2023anomalous, severo2023one, lunde2023conformal, luo2024conformalized, marandon2024conformal, luo2025conformal, wang2025enhancing}. However, for conformal prediction in multi-target regression, independently constructing prediction intervals for each output ignores inter-variable correlations, resulting in overly conservative and less informative prediction regions \cite{feldman2023calibrated}. Accurately building informative multidimensional prediction regions remains challenging.

Messoudi et al. \cite{messoudi2021copula} proposed transforming the non-conformity scores of each target variable into uniformly distributed random variables, and then using copula functions to model the dependencies among these random variables. This approach can better capture the correlations between target variables. However, the regions generated by this method are hyper-rectangular and often excessively large, failing to effectively capture the true underlying distribution of the responses.

Feldman et al. \cite{feldman2023calibrated} proposed a method called STDQR to create smaller prediction regions with more arbitrary shapes. Their paper combines deep learning with DQR to generate non-convex prediction regions, while extending conformal prediction to multivariate responses, ensuring prediction regions achieve user-specified coverage probabilities. However, STDQR employs a uniform quantile threshold for all test inputs and still produces relatively large prediction intervals.

Diquigiovanni et al. \cite{diquigiovanni2022conformal} proposed a set of conformal predictors for multi-target regression, generating finite-sample, valid or exact multivariate simultaneous prediction bands for multivariate function responses under mild assumptions of exchangeable regression pairs. However, this method is limited to i.i.d. functional data and cannot handle dependent cases like functional time series. Xu et al. \cite{xu2024conformal}  introduced MultiDimSPCI, a sequential prediction method that constructs ellipsoidal prediction regions for multivariate time series. This approach aims to provide valid coverage guarantees without relying on exchangeability assumptions. Nevertheless, it requires estimating sample covariance matrices, demanding large datasets and losing effectiveness in high dimensions due to decreased accuracy in matrix estimation.

Johnstone et al. \cite{johnstone2021conformal} explored the extension of conformal prediction to the case of multivariate responses by utilizing Mahalanobis distance as a consistency score for multivariate responses. 
Despite these advancements, constructing accurate and informative prediction regions in multidimensional space remains a significant challenge, with each method having its own limitations and trade-offs.
\subsection{Conditional Density Estimation}
Density estimation methods are used to improve conditional coverage in conformal regression \cite{colombo2024normalizing}. Rigollet and Vert \cite{rigollet2009optimal} introduced kernel density estimation (KDE) for constructing prediction sets, but these methods often led to unstable coverage probabilities, especially in higher dimensions \cite{lei2014distribution}. 
Existing methods struggle with high-dimensional and multimodal data. To address these issues, Samworth et al. \cite{samworth2010asymptotics} proposed Gaussian mixture models (GMMs) for conditional density estimation in multimodal settings. Izbicki et al. \cite{izbicki2019flexible, izbicki2022cd} combined split conformal prediction with highest density sets to improve conditional coverage, but the partitioning problem persisted. Sampson and Chan \cite{sampson2024flexible} addressed these challenges by introducing the KDE for highest predictive density (KDE-HPD) method, which avoids data partitioning and integrates with conditional density estimators.

Methods such as those proposed by Lei and Wasserman \cite{lei2014distribution}, Cai et al. \cite{cai2014adaptive}, and Wang et al. \cite{wang2023probabilistic} leverage conditional density estimation to enhance prediction sets, especially in heteroscedastic settings. Han et al. \cite{han2022split} applied KDE to construct asymmetric prediction bands but encountered difficulties with multimodal distributions. Sesia and Romano \cite{sesia2021conformal} proposed a histogram-based approach, but it is computationally expensive for high-dimensional data. CP2 \cite{plassier2025conditionally} introduced implicit conditional generative models to address issues like multimodality and heteroscedasticity, though ensuring coverage consistency in varying data regions remains a challenge.

\section{Preliminaries and Problem Setup}
\label{sec:preliminaries}

\subsection{Notations}

Let $\{(\valx_i, \valy_i)\}_{i \in \Itrain}$ be a training dataset, where $\valx_i \in \mathcal{X} \subseteq \mathbb{R}^p$ and $\valy_i \in \mathcal{Y} \subseteq \mathbb{R}^d$ are the input feature vector and output response vector defined on the input space $\mathcal{X}$ and output space $\mathcal{Y}$, respectively. We assume that all input-output pairs are independently drawn from an underlying distribution $P$, i.e., $(\rvx, \rvy) \sim P$.

Let $\{(\valx_i, \valy_i)\}_{i \in \Ical}$ be a calibration dataset, and $\{(\valx_i, \valy_i)\}_{i \in \Itest}$ be a test dataset. Suppose $\hat{C}(\valx) \subseteq \mathcal{Y}$ is a prediction region in $\mathcal{Y}$ given an input $\valx$.

Our goal is to construct trustworthy prediction regions for multi-target regression tasks, such that they satisfy a conformal coverage guarantee. Specifically, we say the prediction region $\hat{C}(\rvx)$ guarantees $(1 - \alpha)$ coverage if the following inequality holds:
\begin{equation}
P\left(\rvy_i \in \hat{C}(\rvx_i)\right) \geq 1 - \alpha, \quad \forall i \in \Itest.
\label{eq:coverage}
\end{equation}

\subsection{Conditional Normalizing Flows}

Conditional Normalizing Flows (CNFs) extend the concept of normalizing flows to model conditional distributions. Given an input $\valx \in \mathcal{X}$ and a target $\valy \in \mathcal{Y}$, CNFs learn the conditional distribution $p_{\rvy|\rvx}(\valy|\valx)$ using a conditional prior $p_{\rvz|\rvx}(\valz|\valx)$ and a bijective and smooth mapping $f_\phi: \mathcal{Y} \times \mathcal{X} \to \mathcal{Z}$, which is invertible with respect to $\mathcal{Y}$ and $\mathcal{Z}$. The conditional density is given by:
\begin{align}\label{eq:conditional_density}
    p_{\rvy|\rvx}(\valy|\valx) = p_{\rvz|\rvx}(\valz|\valx) \left| \det \left( \frac{\partial \valz}{\partial \valy} \right) \right| 
    \approx p_{\rvz|\rvx}(f_\phi(\valy, \valx) | \valx) \left| \det \left( \frac{\partial f_\phi(\valy, \valx)}{\partial \valy} \right) \right|.
\end{align}
The generative process involves sampling $\valz \sim p_{\rvz|\rvx}(\valz|\valx)$ from a simple base density conditioned on $\valx$, and then passing it through either a single bijective mapping or a sequence of bijective mappings, denoted as $f_\phi^{-1}(\valz, \valx)$. This allows for modeling multimodal conditional distributions in $\valy$.

The Jacobian determinant plays a crucial role in CNFs and their application to multi-target conformal prediction. Let $\valz = f_\phi(\valy, \valx)$ be the transformed variable that approximately follows a known distribution, typically a multivariate normal distribution. The absolute value of the determinant of the Jacobian, $\left| \det \left( \frac{\partial f_\phi(\valy, \valx)}{\partial \valy} \right) \right|$, represents the change in volume when mapping from the original response space $\mathcal{Y}$ to the latent space $\mathcal{Z}$.

A unit volume in $\mathcal{Y}$ around a point $\valy$ corresponds to $\left| \det \left( \frac{\partial f_\phi(\valy, \valx)}{\partial \valy} \right) \right|$ units in $\mathcal{Z}$. By sampling points $\valz_m$ from the latent space and mapping them back to the original space via $\valy_m = f_\phi^{-1}(\valz_m, \valx)$, we observe that points with smaller values of $\left| \det \left( \frac{\partial f_\phi(\valy, \valx)}{\partial \valy} \right)|_{\valy = \valy^m} \right|$ correspond to regions with larger volumes in $\mathcal{Y}$. In Section \ref{subsec:volume_computation}, we demonstrate how this property is leveraged in the proposed Volume-Sorted Prediction Set (\VSPS{}) method for multi-target conformal prediction.

\subsection{Conformal Prediction}

Conformal Prediction (CP) is a general framework to provide rigorous coverage guarantees. 
For multi-target regression, we aim to construct a set $\hat{C}(\rvx_i)$ that contains the true response $\rvy_i$ with probability at least $1-\alpha$ for all $i \in \Itest$, as indicated in (\ref{eq:coverage}). 

CP typically relies on a non-conformity scoring function $s(\valx, \valy)$ that quantifies the dissimilarity between an observation $(\valx, \valy)$ and the training data. Given a test point $\valx_{\text{test}}$, we compute non-conformity scores for the augmented dataset $\{(\valx_i, \valy_i)\}_{i \in \Ical} \cup \{(\valx_{\text{test}}, \valy)\}$:
\begin{equation}\label{eq:non-conformity}
S_i = s(\valx_i, \valy_i), \quad i \in \Ical, \quad S_{\valy, \text{test}} = s(\valx_{\text{test}}, \valy).
\end{equation}

The conformal prediction set is then defined as:
\begin{equation}\label{eq:set}
\hat{C}(\valx_{\text{test}}) = \left\{\valy : S_{\valy, \text{test}} \leq Q_{1-\alpha}(\{S_i\}_{i \in \Ical})\right\},
\end{equation}
where $Q_{1-\alpha}$ is the $(1-\alpha)(1+1/|\Ical|)$-th empirical quantile of $\{S_i\}_{i \in \Ical}$.

The primary challenge in multi-target conformal prediction is constructing efficient prediction sets that balance coverage and size. As illustrated in Section \ref{sec:methodology}, our \VSPS{} method leverages a conditional normalizing flow $f_\phi$ to transform $\rvy|\rvx$ into a known distribution. By identifying regions with larger volumes in the original space $\mathcal{Y}$ corresponding to larger values of $\left| \det \left( \frac{\partial f_\phi(\valy, \valx)}{\partial \valy} \right)\right|$, \VSPS{} constructs prediction sets that prioritize high-density regions. This approach enables efficient computation of $\hat{C}(\valx_{\text{test}})$, even for non-convex regions in high dimensions, resulting in more informative uncertainty quantification in multi-target regression.

\section{Methodology}
\label{sec:methodology}

\begin{figure}[t]
    \centering
    \includegraphics[width=0.9\textwidth]{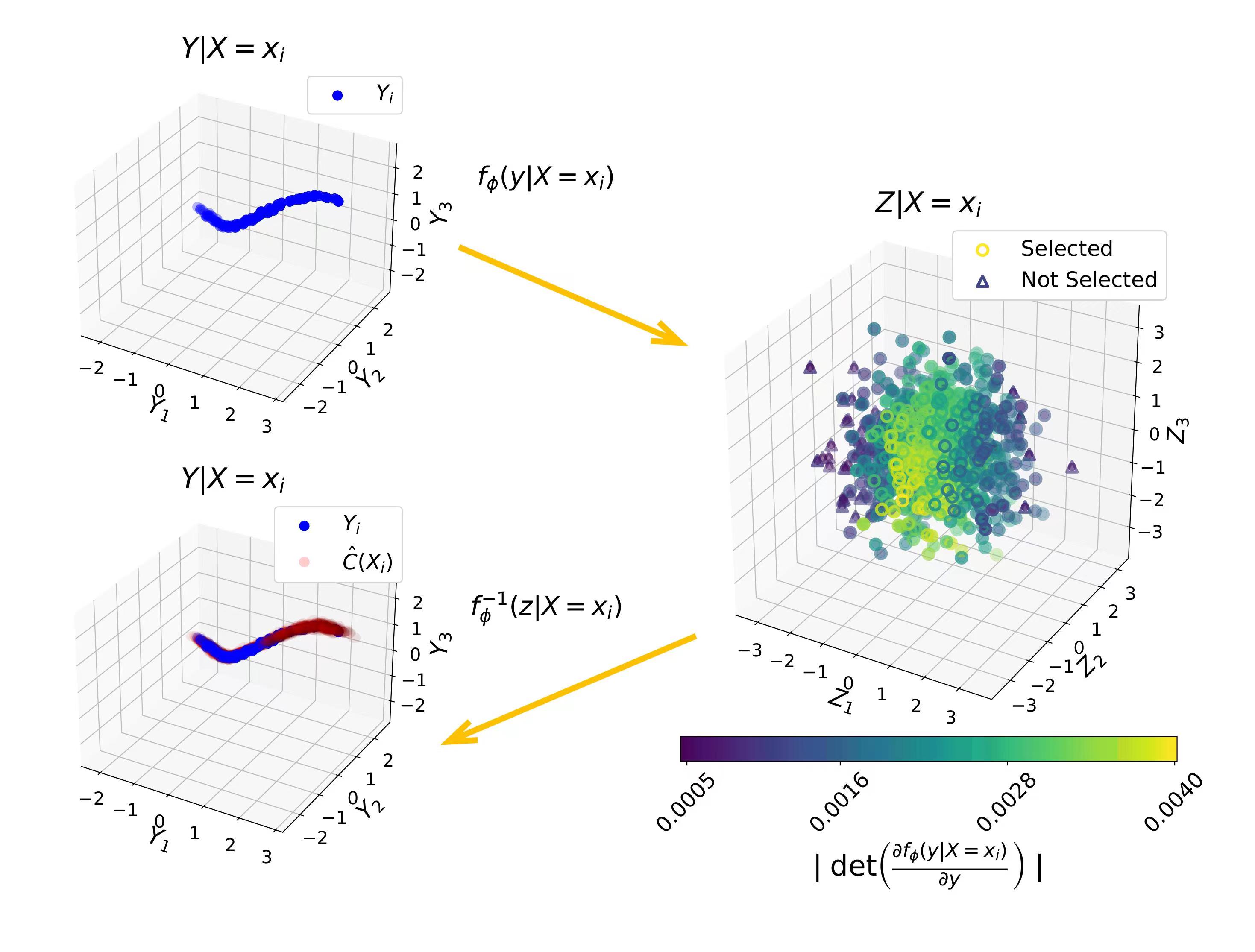}
    \caption{Scheme of the Volume-Sorted Prediction Set (VSPS) method. The CNF maps the original distribution to a standard normal distribution. Samples are generated, sorted, and selected based on their absolute Jacobian determinant. Top $K$ samples are transformed back to form the prediction region as balls of radius \(\gamma\) centered at the samples. \(\gamma\) is calibrated to ensure the desired $1-\alpha$ coverage, whereas the optimal $K$ is determined using a validation set. See Section \ref{sec:methodology} for details.}
    \label{fig:vsps_scheme}
\end{figure}

We introduce the Volume-Sorted Prediction Set (\VSPS{}) method for multi-target conformal prediction using conditional normalizing flows (CNFs), as illustrated in \Cref{fig:vsps_scheme}. \VSPS{} transforms the conditional distribution \(\rvy|\rvx\) into a simple latent distribution via a CNF \(f_\phi\) and constructs efficient prediction sets by prioritizing regions with larger \(\left| \det \left( \frac{\partial f_\phi(\valy, \valx)}{\partial \valy} \right) \right|\), corresponding to high-density areas in \(\mathcal{Y}\). The method integrates sampling via CNFs (\Cref{subsec:volume_computation}) with prediction region construction and conformal calibration (\Cref{subsec:adaptive_prediction}), along with optimal sample number selection, to guarantee the desired coverage while maximizing prediction set efficiency.

\subsection{Sampling Based on Conditional Normalizing Flow}
\label{subsec:volume_computation}

\begin{protocol}[!t]
\caption{Latent Variable Sampling and Mapping}
\begin{algorithmic}[1]
\Function{Sampling}{$\valx_{\text{test}}, M, f_\phi$}
    \State Sample latent variables:
    \[
    \{\valz^m\}_{m=1}^M \sim \mathcal{N}(0, I_d).
    \]
    \State Map latent samples back to the original space using the inverse transformation:
    \[
    \valy^m = f_\phi^{-1}(\valz^m, \valx_{\text{test}}), \quad \forall m = 1, \dots, M.
    \]
    \For{each sample \(\valy^m\)}
        \State Compute the Jacobian determinant:
        \[
        J^m = \left|\det \left(\frac{\partial f_\phi(\valy, \valx)}{\partial \valy} \right) \bigg|_{\valy = \valy^m, \valx=\valx_{\text{test}}} \right|.
        \]
    \EndFor
    \State Sort the samples \(\{\valy^m\}_{m=1}^M\) based on their Jacobian determinants \(J^m\) in descending order.
    \State Denote the sorted samples as \(\{\valy^{(m)}\}_{m=1}^M\), where \(m\) indicates the rank.
    \State \Return \((\valy^{(1)}, \valy^{(2)}, \dots, \valy^{(M)})\).
\EndFunction
\end{algorithmic}\label{alg:sampling}
\end{protocol}

We use CNFs to model complex target distributions in multi-target regression by learning a bijective and smooth mapping \(f_\phi: \mathcal{Y} \times \mathcal{X} \to \mathcal{Z}\), where \(\mathcal{Z}\) is a latent space following a simple distribution, e.g., \(\mathcal{N}(0, I_d)\).

Using $f_\phi$, the conditional density (\ref{eq:conditional_density}) can be approximated as:
\[
p_{\rvy|\rvx}(\valy|\valx) \approx p_{\rvz|\rvx}\big(f_\phi(\valy, \valx) \mid \valx\big) \cdot \left|\det \left( \frac{\partial f_\phi(\valy, \valx)}{\partial \valy} \right)\right|,
\]
where the term \(\left|\det \left( \frac{\partial f_\phi(\valy, \valx)}{\partial \valy} \right)\right|\) is the absolute value of the Jacobian determinant, quantifying the volume change induced by the transformation \(f_\phi\), whereas the original conditional distribution \(p_{\rvy|\rvx}(\valy|\valx)\) is transformed into a standard multivariate normal distribution \(\mathcal{N}(0, I_d)\) in the latent space \(\mathcal{Z}\). 

To leverage the high-density areas identified by the CNF, for a given test input \(\valx_{\text{test}}\), we propose the sampling procedure as described in Function \ref{alg:sampling}.

\paragraph{Simplification of Sample Weighing in the Transformed Space} Note that we do not need to weigh the samples using the conditional density as in (\ref{eq:conditional_density}). Instead, only the absolute value of the Jacobian determinants is used. This is because the base distribution is implicitly accounted for through the sampling process in the transformed space.

\subsection{Prediction Region Construction and Conformal Calibration}
\label{subsec:adaptive_prediction}

\begin{protocol}[!t]
\caption{Conformal Calibration}
\begin{algorithmic}[1]
\Function{Calibration}{$\{y^{(m)}\}_{m=1}^{K^*}, \mathcal{D}_{\text{cal}}, \alpha$}
\For{each $(\valx_i, \valy_i)$ in $\mathcal{D}_{\text{cal}}$}
    \State Define the distance to the nearest ball center as
    \[
    d_i = \min_{m=1,\dots, K^*} \| \valy_i - \valy^{(m)} \|_2.
    \]
\EndFor
\State Define the radius $\gamma$ as
\[
\gamma = \inf\left\{ t \in \mathbb{R} : \frac{1}{|\mathcal{I}_{\text{calib}}|+1} \sum_{i \in \mathcal{I}_{\text{calib}}} \mathbf{1}\{ d_i \le t \} \ge 1 - \alpha \right\}.
\]
\State \Return $\gamma$.
\EndFunction
\end{algorithmic}\label{alg:calibrate}
\end{protocol}

Given the selected samples \(\valy^{(1)}, \dots, \valy^{(K)}\) representing high-density areas, the prediction region is constructed as a union of balls centered at these samples. Specifically, the prediction region is defined as the union of balls with radius \(\gamma\):
\[
\hat{C}(\valx_{\text{test}}) = \bigcup_{m=1}^{K} B\left(\valy^{(m)}, \gamma\right),
\]
where \(B\left(\valy^{(m)}, \gamma\right)\) represents a ball centered at the sample \(\valy^{(m)}\) with radius \(\gamma\). Formally, the ball is given by:
\[
B\left(\valy^{(m)}, \gamma\right) = \left\{\valy \in \mathcal{Y} : \|\valy - \valy^{(m)}\|_2 \leq \gamma \right\},
\]
where \(\|\cdot\|_2\) denotes the Euclidean norm. This construction ensures that the prediction region is both continuous and interpretable within the response space.
The construction of ball-shaped prediction regions, with a focus on high-density areas, aligns with the approaches in \cite{wang2023probabilistic, feldman2023calibrated, zheng2024optimizing}, though more complex shapes could also be explored \cite{tumu2024multi, thurin2025optimal}.

Next, we describe the procedure for calibrating the radius \(\gamma\) to guarantee the desired coverage level (\ref{eq:coverage}):
\[
\mathbb{P}(\rvy \in \hat{C}(\rvx)) \geq 1 - \alpha.
\]

Intuitively, if \(\gamma\) is too small and leads to undercoverage, the balls will be expanded; if \(\gamma\) is too large, the balls will be shrunk. Using split conformal prediction, \(\gamma\) is chosen as the quantile of the minimum distances between each calibration data point and its nearest ball center, as outlined in Function \ref{alg:calibrate}.

To further enhance the efficiency of the prediction region, we propose selecting the optimal number of samples \(K\) using a separate validation set, drawing inspiration from the Validity First Conformal Prediction (VFCP) framework in \cite{yang2024selection, luo2024weighted}, as shown in Algorithm \ref{alg:selection}.
The complete VSPS method is illustrated in Algorithm~\ref{alg:vsps}.

\begin{algorithm}[t!]
\small
\caption{Optimal $K^*$ Selection}
\label{alg:selection}
\begin{algorithmic}[1]
\Require Validation set $\mathcal{D}_{\text{val}} = \{(\valx_i, \valy_i)\}_{i \in \Ival}$, calibration set \(\mathcal{D}_{\text{cal}} = \{(\valx_i, \valy_i)\}_{i \in \Ical}\), trained CNF model $f_\phi$, coverage level $1-\alpha$, sample size $M$.
\Ensure The optimal number of selected top samples $K^*$.

\State Initialize \(K^*\leftarrow M\) and \(\text{Size}_{\min}\leftarrow \infty\)

\rlap{\hbox{\textcolor{darkgreen}{\Comment{Precompute the \(M\) samples for each validation point (vectorized over \(\mathcal{D}_{\text{val}}\))}}}}
\For{each \((\valx_i,\valy_i) \in \mathcal{D}_{\text{val}}\)}
    \State Compute \(\valy_i^{(1)}, \dots, \valy_i^{(M)} \leftarrow\) \Call{Sampling}{$\valx_i, M, f_\phi$}
\EndFor

\rlap{\hbox{\textcolor{darkgreen}{\Comment{Calibrate and record volumes of prediction regions.}}}}
\For{\(K = 1,\dots,M\)}
    
    \State For every \(i\in \Ival\), compute calibrated radius 
    $\gamma^{(K)}_i =$ \Call{Calibration}{$\{\valy_i^{(m)}\}_{m=1}^{K}, \mathcal{D}_{\text{cal}}, \alpha$}
    \State Construct prediction regions for all \(i\in \Ival\):
    \[
    \hat{C}^{(K)}(\valx_i) \;=\; \bigcup_{m=1}^{K} B\Bigl(\valy_i^{(m)}, \gamma^{(K)}_i\Bigr)
    \]
    \State Compute the average volume of prediction regions (\(\text{Size}\)):
    \[
    \text{Size}_K \;=\; \frac{1}{|\mathcal{D}_{\text{val}}|}\sum_{i\in \Ival}\text{Volume}\Bigl(\hat{C}^{(K)}(\valx_i)\Bigr)
    \]
    \rlap{\hbox{\textcolor{darkgreen}{\Comment{Update best \(K^*\) based on prediction region size}}}}
    \If{\(\text{Size}_K < \text{Size}_{\min}\)}
        \State Update \(K^* \leftarrow K\) and \(\text{Size}_{\min} \leftarrow \text{Size}_K\)
    \EndIf
\EndFor
\State \Return \(K^*\)

        
\end{algorithmic}
\end{algorithm}

\paragraph{Comparison with ST-DQR \cite{feldman2023calibrated}}

Spherically Transformed Directional Quantile Regression (ST-DQR) is a competitive method that addresses the convexity restriction of multivariate quantile regression models. However, compared to ST-DQR, which relies on Conditional Variational Autoencoders (CVAEs), the proposed \VSPS{} offers several new perspectives:

\begin{enumerate}[topsep=0pt, parsep=0pt, partopsep=0pt, itemsep=2pt]
    \item \textbf{Explicit Tractable Density Evaluation:} \VSPS{} enables exact density evaluations, avoiding the reliance on lower bounds as in CVAEs.
    
    \item \textbf{Informative Jacobian Determinants:} The absolute values of the Jacobian determinants provide meaningful information about the significance of different samples, enabling weighted sampling during the construction of prediction regions.
    
    \item \textbf{Full Invertibility:} \VSPS{} eliminates the need for training a separate decoder, ensuring lossless transformations between the original and latent spaces for precise modeling.
    
    \item \textbf{No Quantile Regression Needed:} Unlike ST-DQR, \VSPS{} does not require training a separate quantile regression model. Instead, the parameterized calibration of the volume-sorted prediction set naturally satisfies the requirements for constructing continuous prediction regions.
\end{enumerate}

\paragraph{Comparison with CONTRA~\cite{fang2025contra}}

The proposed \VSPS{} also differs from a concurrent work CONTRA in the following ways:

\begin{enumerate}[topsep=0pt, parsep=0pt, partopsep=0pt, itemsep=2pt]
    \item \textbf{Prediction Region Construction in Response Space:} \VSPS{} constructs prediction regions as unions of balls in the response space \(\mathcal{Y}\), whereas CONTRA operates in the latent space \(\mathcal{Z}\). Although the homeomorphism of the normalizing flow ensures that CONTRA constructs closed and connected prediction regions, in practice, Monte Carlo sampling is often required to approximate the boundary in the latent space. This introduces additional numerical approximations and computational overhead. More importantly, CONTRA does not utilize the Jacobian determinant information, making it less effective in constructing efficient prediction sets.
    
    \item \textbf{Additional Parameter for Sample Selection:} \VSPS{} introduces a new parameter \(K\) for selecting samples in high-density areas. This additional parameter allows for more efficient and flexible construction of prediction regions.
\end{enumerate}

\begin{algorithm}[h]
\small
\caption{Volume-Sorted Prediction Set (\VSPS{})}
\label{alg:vsps}
\begin{algorithmic}[1]
\Require Test input $\valx_{\text{test}}$, calibration set \(\mathcal{D}_{\text{cal}} = \{(\valx_i, \valy_i)\}_{i \in \Ical}\), trained CNF model $f_\phi$, coverage level $1-\alpha$, sample size $M$, the optimal number of selected top samples $K^*$ (obtained from Algorithm \ref{alg:selection}).
\Ensure Prediction region $\hat{C}(\valx)$.

\rlap{\hbox{\textcolor{darkgreen}{\Comment{Sort and map latent samples to the original space:}}}}
\State Compute \(\valy^{(1)}, \dots, \valy^{(M)} \leftarrow\) \Call{Sampling}{$\valx_{\text{test}}, M, f_\phi$}

\rlap{\hbox{\textcolor{darkgreen}{\Comment{Calibrate the radius $\gamma$ using top $K^*$ samples:}}}}

\State Compute the calibrated radius $\gamma =$ \Call{Calibration}{$\{y^{(m)}\}_{m=1}^{K^*}, \mathcal{D}_{\text{cal}}, \alpha$}

\rlap{\hbox{\textcolor{darkgreen}{\Comment{Construct the prediction region $\hat{C}(\valx)$:}}}}
\State Define the prediction region $\hat{C}(\valx)$ as
\[
\hat{C}(\valx_{\text{test}}) = \bigcup_{m=1}^{K^*} B\big(\valy^{(m)}, \gamma\big).
\]
\State \textbf{return} $\hat{C}(\valx_{\text{test}})$.
\end{algorithmic}
\end{algorithm}

\section{Theoretical Analysis}
\label{sec:theoretical_analysis}

In this section, we prove the coverage guarantee for the proposed method.
Our goal is to construct prediction regions $\hat{C}(\rvx)$ such that, for a predetermined coverage level $1 - \alpha$, we have
\[
P(\rvy \in \hat{C}(\rvx)) \geq 1 - \alpha.
\]

The method utilizes a CNF $f_\phi: \mathcal{Y} \times \mathcal{X} \rightarrow \mathcal{Z}$ to transform the original distribution into a standard normal distribution. The prediction region $\hat{C}(\rvx)$ is constructed as a union of balls centered around selected samples in the original space.

\begin{definition}[Exchangeability]
A set of random variables $\{ (\rvx_i, \rvy_i) \}_{i=1}^{N}$ are \emph{exchangeable} if their joint distribution is invariant under any permutation $\pi$, that is,
\begin{align*}
P\left( (\rvx_1, \rvy_1), \ldots, (\rvx_N, \rvy_N) \right) =
P\left( (\rvx_{\pi(1)}, \rvy_{\pi(1)}), \ldots, (\rvx_{\pi(N)}, \rvy_{\pi(N)}) \right),
\end{align*}
for any permutation $\pi$ of $\{1, 2, \ldots, N\}$.
\end{definition}

\begin{lemma}[Exchangeability of Transformed Samples]
\label{lemma:exchangeability}
Assume that the random variables $\{ (\rvx_i, \rvy_i) \}_{i=1}^{N}$ are exchangeable. Then, the transformed random variables $\{ (\rvx_i, \rvz_i) \}_{i=1}^{N}$, where $\rvz_i = f_\phi(\rvy_i, \rvx_i)$, are also exchangeable.
\end{lemma}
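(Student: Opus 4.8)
The plan is to recognize that the sequence $\{(\rvx_i, \rvz_i)\}_{i=1}^N$ is obtained from $\{(\rvx_i, \rvy_i)\}_{i=1}^N$ by applying one and the same deterministic, measurable map to each coordinate, and then to invoke the fact that exchangeability is preserved under any such coordinate-wise transformation. Concretely, I would introduce the single map $g : \mathcal{X} \times \mathcal{Y} \to \mathcal{X} \times \mathcal{Z}$ defined by $g(\valx, \valy) = (\valx, f_\phi(\valy, \valx))$, which is measurable because $f_\phi$ is bijective and smooth. The crucial modeling point is that $\rvx_i$ must be carried along inside the pair, since $\rvz_i = f_\phi(\rvy_i, \rvx_i)$ depends on $\rvx_i$ as well; keeping the feature in the pair ensures that $g$ is a single fixed function applied identically to every index, so that $(\rvx_i, \rvz_i) = g(\rvx_i, \rvy_i)$.

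First I would set $W = ((\rvx_1,\rvy_1), \dots, (\rvx_N,\rvy_N))$ and let $G = g^{\otimes N}$ be the product map acting coordinatewise, so that $((\rvx_1,\rvz_1),\dots,(\rvx_N,\rvz_N)) = G(W)$. For a permutation $\pi$, write $\sigma_\pi$ for the operator that permutes the $N$ coordinates. The key algebraic identity is that $G$ commutes with $\sigma_\pi$, namely $G \circ \sigma_\pi = \sigma_\pi \circ G$, precisely because the same map $g$ is applied in every slot. Denoting by $P_W$ the law of $W$, the law of the transformed sequence is the pushforward $G_\ast P_W$, and the computation
\[
\sigma_{\pi\,\ast}(G_\ast P_W) = (\sigma_\pi \circ G)_\ast P_W = (G \circ \sigma_\pi)_\ast P_W = G_\ast(\sigma_{\pi\,\ast} P_W) = G_\ast P_W
\]
shows that the transformed law is invariant under every coordinate permutation, where the last equality uses the exchangeability of the originals, i.e.\ $\sigma_{\pi\,\ast} P_W = P_W$. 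This invariance is exactly the claimed exchangeability of $\{(\rvx_i,\rvz_i)\}_{i=1}^N$.

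I expect no serious obstacle here: the result is essentially the standard closure of exchangeability under coordinatewise measurable maps, and the only point requiring care is the bookkeeping that $f_\phi$ consumes the per-index feature $\rvx_i$, which is handled cleanly by transforming the pair rather than $\rvy_i$ alone. As an equivalent alternative I could give a test-function argument: for any bounded measurable $h$, substitute $\rvz_i = f_\phi(\rvy_i,\rvx_i)$ to rewrite $\mathbb{E}[h((\rvx_i,\rvz_i)_{i})]$ as $\mathbb{E}[\tilde h((\rvx_i,\rvy_i)_{i})]$ for the composite $\tilde h = h \circ G$, and then apply the permutation invariance of the original joint law directly. Finally, I would remark that only measurability of $f_\phi$ is actually used, so the smoothness and invertibility of the flow play no role in this particular statement and the lemma holds for any measurable transformation.
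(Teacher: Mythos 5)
Your proposal is correct and follows the same idea as the paper's own (one-sentence) proof: a fixed deterministic map applied identically to each index preserves exchangeability. You simply make the argument fully rigorous by introducing the coordinatewise map $g(\valx,\valy)=(\valx, f_\phi(\valy,\valx))$, observing that its product $G$ commutes with every permutation operator, and pushing the law forward --- which is exactly the formal content the paper leaves implicit.
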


\begin{proof}
The transformation $f_\phi$ is a deterministic function applied identically to each sample. Therefore, the exchangeability property is preserved in the transformed space.
\end{proof}

We can now state the coverage guarantee for the proposed VSPS method. The key idea behind this result is that the prediction region we construct for the test point \((\rvx_{\text{test}}, \rvy_{\text{test}})\), using the top \(K^*\) samples as the centers of the prediction ball, has the same coverage probability for all points in the calibration set. This is because the test point \((\rvx_{\text{test}}, \rvy_{\text{test}})\) and the points in the calibration set are exchangeable and the radius \(\gamma\) is determined directly from the calibration set \(\mathcal{D}_{\text{cal}}\). Importantly, the choice of \(K^*\) is independent of \(\mathcal{D}_{\text{cal}}\), as \(K^*\) is selected by optimizing on a separate validation set. 

\begin{theorem}[Coverage Guarantee]
\label{theorem:coverage}
Let $\mathcal{D}_{\text{cal}} = \{(\valx_i, \valy_i)\}_{i=1}^N$ be the calibration set and $(\valx_{\text{test}}, \valy_{\text{test}})$ be the test sample. Suppose that $\mathcal{D}_{\text{cal}} \cup \{(\valx_{\text{test}}, \valy_{\text{test}})\}$ are exchangeable. Then, the prediction region $\hat{C}(\valx_{\text{test}})$ constructed using the optimized parameter $K^*$ satisfies
\[
P(\valy_{\text{test}} \in \hat{C}(\valx_{\text{test}})) \geq 1 - \alpha.
\]
\end{theorem}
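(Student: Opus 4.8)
The plan is to reduce the coverage guarantee to the standard split-conformal argument by identifying the correct non-conformity score and verifying that its exchangeability is undisturbed by the way $K^*$ and the ball centers are chosen. Define, for any input $\valx$, the score $s(\valx, \valy) = \min_{m=1,\dots,K^*} \|\valy - \valy^{(m)}(\valx)\|_2$, where $\valy^{(1)}(\valx), \dots, \valy^{(M)}(\valx)$ are the volume-sorted samples produced by \textsc{Sampling}$(\valx, M, f_\phi)$ and $K^*$ is the number selected by Algorithm~\ref{alg:selection}. With this score, the event $\valy_{\text{test}} \in \hat{C}(\valx_{\text{test}})$ is exactly $\{s(\valx_{\text{test}}, \valy_{\text{test}}) \le \gamma\}$, since a point lies in the union of balls $\bigcup_{m=1}^{K^*} B(\valy^{(m)}, \gamma)$ if and only if its distance to the nearest center is at most $\gamma$. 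Thus $d_i = s(\valx_i, \valy_i)$ for $i \in \Ical$ and the quantity $s(\valx_{\text{test}}, \valy_{\text{test}})$ plays the role of the test score, and $\gamma$ is precisely the $(1-\alpha)(1 + 1/|\Ical|)$-level empirical quantile of the calibration scores as computed in Function~\ref{alg:calibrate}.

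First I would establish that the map $\valy \mapsto s(\valx, \valy)$ is a fixed, data-independent function once $f_\phi$, $M$, $K^*$, and the latent sampling randomness are fixed. The crucial observation, emphasized in the paragraph preceding the theorem, is that $K^*$ is selected on the separate validation set $\mathcal{D}_{\text{val}}$ and the trained flow $f_\phi$ comes from $\Itrain$; neither depends on $\mathcal{D}_{\text{cal}}$ or on the test pair. Consequently the scoring function $s$ is independent of the calibration-plus-test collection, which is the hypothesis needed to invoke the exchangeability machinery. Here I would cite Lemma~\ref{lemma:exchangeability} to note that exchangeability is preserved under the deterministic flow, though strictly speaking the relevant statement is simply that applying one common measurable function $s$ to each exchangeable pair $(\valx_i, \valy_i)$ yields exchangeable scores $\{S_i\}_{i \in \Ical} \cup \{S_{\text{test}}\}$.

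Next I would carry out the standard quantile-lemma step. Given that $S_1, \dots, S_N, S_{\text{test}}$ are exchangeable (real-valued) random variables, the rank of $S_{\text{test}}$ among the $N+1$ values is uniformly distributed on $\{1,\dots,N+1\}$ up to ties, so $P\bigl(S_{\text{test}} \le Q_{1-\alpha}(\{S_i\}_{i \in \Ical})\bigr) \ge 1 - \alpha$ when $Q_{1-\alpha}$ is taken as the $\lceil (1-\alpha)(N+1)\rceil$-th order statistic, which matches the $\inf$-definition of $\gamma$ in Function~\ref{alg:calibrate}. Combining this with the set-membership equivalence from the first paragraph gives
\[
P\bigl(\valy_{\text{test}} \in \hat{C}(\valx_{\text{test}})\bigr) = P\bigl(s(\valx_{\text{test}}, \valy_{\text{test}}) \le \gamma\bigr) \ge 1 - \alpha,
\]
which is the claim.

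The main obstacle I anticipate is the subtlety introduced by the data-dependent selection of $K^*$, since a naive worry is that optimizing the prediction-region size over $K$ could bias the coverage. The resolution—which I would make explicit—is that $K^*$ is chosen using only $\mathcal{D}_{\text{val}}$ (and $\mathcal{D}_{\text{cal}}$ enters Algorithm~\ref{alg:selection} only through the radius calibration, not through the selection criterion in a way that breaks exchangeability of the final calibration scores), so conditional on the realized value of $K^*$ the scoring function is frozen before the calibration set is examined for the final $\gamma$. A secondary delicate point is the internal randomness of \textsc{Sampling}: the latent draws $\{\valz^m\}$ must be regarded as fixed (or the guarantee read as holding conditionally on them, or in expectation over them), because otherwise the ball centers $\valy^{(m)}(\valx_{\text{test}})$ and $\valy^{(m)}(\valx_i)$ are generated from independent randomizations and $s$ would not be literally the same function across points; I would state that the coverage holds conditionally on the sampling randomness, which suffices and then integrates out.
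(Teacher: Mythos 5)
Your proposal is correct and follows essentially the same route as the paper's proof: the nonconformity score is the distance to the nearest of the $K^*$ volume-sorted centers, membership in $\hat{C}(\valx_{\text{test}})$ is equivalent to that score being at most $\gamma$, and exchangeability of the calibration-plus-test scores together with the standard quantile argument yields the $1-\alpha$ bound. You are in fact somewhat more explicit than the paper about the two delicate points it only asserts, namely the internal randomness of \textsc{Sampling} (coverage holding conditionally on the latent draws) and the need for $K^*$ to be fixed independently of the calibration data before $\gamma$ is computed.
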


\begin{proof}[Proof of Theorem~\ref{theorem:coverage}]
The optimal \(K^*\) is selected using a separate validation set \(\mathcal{D}_{\text{val}}\), which is independent of both the calibration set \(\mathcal{D}_{\text{cal}}\) and the test sample \((\valx_{\text{test}}, \valy_{\text{test}})\). In our method, the radius \(\gamma\) is computed as the \((1-\alpha)(1+1/N)\)-quantile of the non-conformity scores derived from the calibration set.

Define the non-conformity score (\ref{eq:non-conformity}) for each data point in the augmented set, \(\mathcal{D}_{\text{cal}} \cup \{(\valx_{\text{test}}, \valy_{\text{test}})\}
\),
as the distance to the nearest ball center:
\[
d_i = \min_{m=1,\dots, K^*} \|\valy_i - \valy^{(m)}\|_{2}.
\]

Recall that the prediction region for a given test sample \((\valx_{\text{test}}, \valy_{\text{test}})\) is defined as
\[
\hat{C}(\valx_{\text{test}}) = \bigcup_{m=1}^{K^*} B\Bigl(\valy_{\text{test}}^{(m)}, \gamma\Bigr)
=\left\{ \valy \in \mathbb{R}^d : \min_{1\le m \le K^*} d\Bigl(\valy, \valy_{\text{test}}^{(m)}\Bigr) \le \gamma \right\}.
\]

Then, the radius \(\gamma\) is computed as
\[
\gamma = Q_{1-\alpha}\Bigl(\{d_i\}_{i=1}^{N}\Bigr),
\]
where \(Q_{1-\alpha}(\cdot)\) denotes the \((1-\alpha)(1+1/N)\)-quantile over the set of non-conformity scores from the calibration set.

By exchangeability between the calibration samples and the test sample, the non-conformity score for the test sample
\[
d_{\text{test}} = \min_{\valy' \in R_y(\valx_{\text{test}})} d\bigl(\valy_{\text{test}}, \valy'\bigr)
\]
satisfies
\[
P\Bigl(d_{\text{test}} \le Q_{1-\alpha}\bigl(\{d_i\}_{i=1}^{N}\bigr)\Bigr) \ge 1-\alpha.
\]
This implies that
\[
P\Bigl(\valy_{\text{test}} \in \hat{C}(\valx_{\text{test}})\Bigr) \geq 1-\alpha.
\]

Since the selection of \(K^*\) is performed independently using the validation set \(\mathcal{D}_{\text{val}}\), and under the exchangeability assumption, the constructed prediction region \(\hat{C}(\valx_{\text{test}})\) achieves the desired coverage level \(1-\alpha\) for any test sample.
\end{proof}

This theorem establishes that the proposed method provides valid coverage at the desired level $1-\alpha$. The coverage guarantee holds for the test sample despite using parameter optimized on a separate validation set.

\section{Experiments}
\label{sec:experiments}

To evaluate the effectiveness of our proposed \VSPS{} method, we conduct a comprehensive set of experiments on both synthetic and real-world datasets. We compare our approach against existing baseline methods \cite{feldman2023calibrated}, including Naïve Quantile Regression (\text{Naïve QR}), Non-Parametric Directional Quantile Regression (\text{NPDQR}), and Spherically Transformed Directional Quantile Regression (\text{ST-DQR}). Our code and data are publicly available at \href{https://github.com/luo-lorry/VSPS}{GitHub}. 

For both the synthetic and real datasets, the desired coverage is set to $1-\alpha=0.9$. We split each dataset into 38.4\% for training, 25.6\% for calibration, 16\% for validation, and 20\% for testing. For the real datasets both the feature vectors and responses were normalized to zero mean and unit variance. The results are obtained by averaging over 10 random data splits.

For both CVAE with 3-dimensional latent space and the neural network based DQR, the model architecture consisted of 3 hidden layers with 64 units each and leaky ReLU activation. For CNF, we use Conditional Masked Autoregressive Flow (MAF) implemented by stacking conditional MADE layers \cite{papamakarios2017masked}. We used the Adam optimizer with learning rate $10^{-3}$ and batch size 256, employing early stopping during training with the validation data.

We evaluate our results based on three key metrics: 

\noindent
\textit{Marginal Coverage} (Coverage): The proportion of test samples for which the correct multi-target output is included in the prediction region.

\noindent
\textit{Prediction Region Size} (Size): The size of the prediction region, calculated by discretizing the target output space $\mathcal{Y}$ into a grid and counting the number of grid points contained within the region.

\noindent
\textit{Conditional Coverage} (Cond. Coverage): The conditional coverage, which measures the coverage conditioned on specific values of the input features.

\subsection{Synthetic Data}

Measuring conditional coverage is challenging  because the conditional probability $p_{\rvy|\rvx}(y|x)$ is unknown.To address this, we use a synthetic dataset with a v-shaped structure and 2-dimensional outputs. The 1-dimensional feature $X \in \{1.5, 2.0, 2.5\}$ is discrete, allowing us to explicitly compute conditional coverage by sampling multiple pairs $(x, Y_i)$ at each value of $x$. The synthetic data generation process follows \cite{feldman2023calibrated}.

For the experiments, we report the conditional coverage as the smallest conditional coverage across the values of $X \in \{1.5, 2.0, 2.5\}$. This approach rigorously tests our method’s ability to maintain coverage across different conditional distributions. The results for Marginal Coverage, Prediction Region Size, and Conditional Coverage on the synthetic dataset are provided in Table~\ref{tab:synthetic_data_results}.

\begin{table}[t]
\centering
\small
\begin{adjustbox}{max width=\columnwidth}
\begin{tabular}{@{}lccc@{}}
\toprule
\textbf{Metric} & \text{Naïve QR} & \text{NPDQR} & \text{VSPS} \\
\midrule
Coverage & 90.16 (0.16) & 90.44 (1.26) & 90.06 (1.30) \\
Size & 1266.68 (18.34) & 1367.86 (90.80) & \textbf{104.34 (2.43)} \\
Cond. Coverage & 86.08 (0.93) & 84.56 (1.31) & \textbf{87.57 (1.55)} \\
\bottomrule
\end{tabular}
\end{adjustbox}
\caption{Results for Marginal Coverage, Prediction Region Size, and Conditional Coverage for the synthetic dataset.}
\label{tab:synthetic_data_results}
\end{table}

\subsection{Real Data}

We further evaluate our method on several real-world benchmark datasets: Blog Feedback (Blog), Physicochemical Properties of Protein Tertiary Structure (Bio), House Sales in King County, USA (House), and Medical Expenditure Panel Survey datasets from 2019, 2020, and 2021 (meps\_19, meps\_20, meps\_21). These datasets are widely used as benchmarks in prior works \cite{romano2019conformalized, romano2020classification}. We modify each dataset to have a 2-dimensional response following the approach described in \cite{feldman2023calibrated}. The results from these datasets are summarized in Table~\ref{tab:real_data_experiments}.

\begin{table}[ht]
\centering
\begin{adjustbox}{max width=\columnwidth}
\begin{tabular}{llcccc}
\toprule
\multirow{2}{*}{\textbf{Dataset}} & \multirow{2}{*}{\textbf{Metric}} & \multicolumn{4}{c}{\textbf{Method}} \\
\cmidrule(lr){3-6}
 &  & \text{Naïve QR} & \text{NPDQR} & \text{ST-DQR} & \text{VSPS} \\
\midrule

\multirow{2}{*}{bio} 
    & Coverage & 90.06 (0.38) & 90.07 (0.40) & 90.08 (0.45) & 89.86 (0.36) \\
    & Size     & 418.17 (19.79) & 562.45 (126.28) & 324.53 (29.10) & \textbf{137.39 (15.63)} \\

\multirow{2}{*}{blog\_data} 
    & Coverage & 90.03 (0.53) & 89.85 (0.45) & 90.09 (0.43) & 90.18 (0.32) \\
    & Size     & 217.39 (10.60) & 547.27 (467.80) & 473.71 (95.68) & \textbf{99.84 (23.44)} \\

\multirow{2}{*}{house} 
    & Coverage & 90.28 (0.68) & 89.85 (0.75) & 90.02 (0.80) & 89.84 (0.68) \\
    & Size     & 448.27 (18.10) & 618.24 (259.47) & 334.61 (24.08) & \textbf{165.19 (8.00)} \\

\multirow{2}{*}{meps\_19} 
    & Coverage & 90.04 (0.56) & 90.11 (0.71) & 90.33 (0.42) & 89.91 (0.73) \\
    & Size     & 348.58 (83.96) & 879.36 (874.52) & 169.99 (28.86) & \textbf{120.44 (26.40)} \\

\multirow{2}{*}{meps\_20} 
    & Coverage & 90.40 (0.62) & 89.91 (0.72) & 90.15 (1.01) & 89.89 (0.54) \\
    & Size     & 366.75 (25.34) & 929.51 (519.18) & 181.33 (15.45) & \textbf{121.92 (13.53)} \\

\multirow{2}{*}{meps\_21} 
    & Coverage & 89.83 (0.44) & 89.66 (0.42) & 89.95 (0.77) & 89.77 (0.68) \\
    & Size     & 369.72 (57.84) & 721.05 (498.28) & 182.67 (24.56) & \textbf{116.71 (19.90)} \\

\bottomrule
\end{tabular}
\end{adjustbox}
\caption{Results for Marginal Coverage and Prediction Region Size across the real datasets.}
\label{tab:real_data_experiments}
\end{table}

\begin{figure}
  \centering
  \begin{tabular}{ccc}
    {\includegraphics[width=0.3\columnwidth]{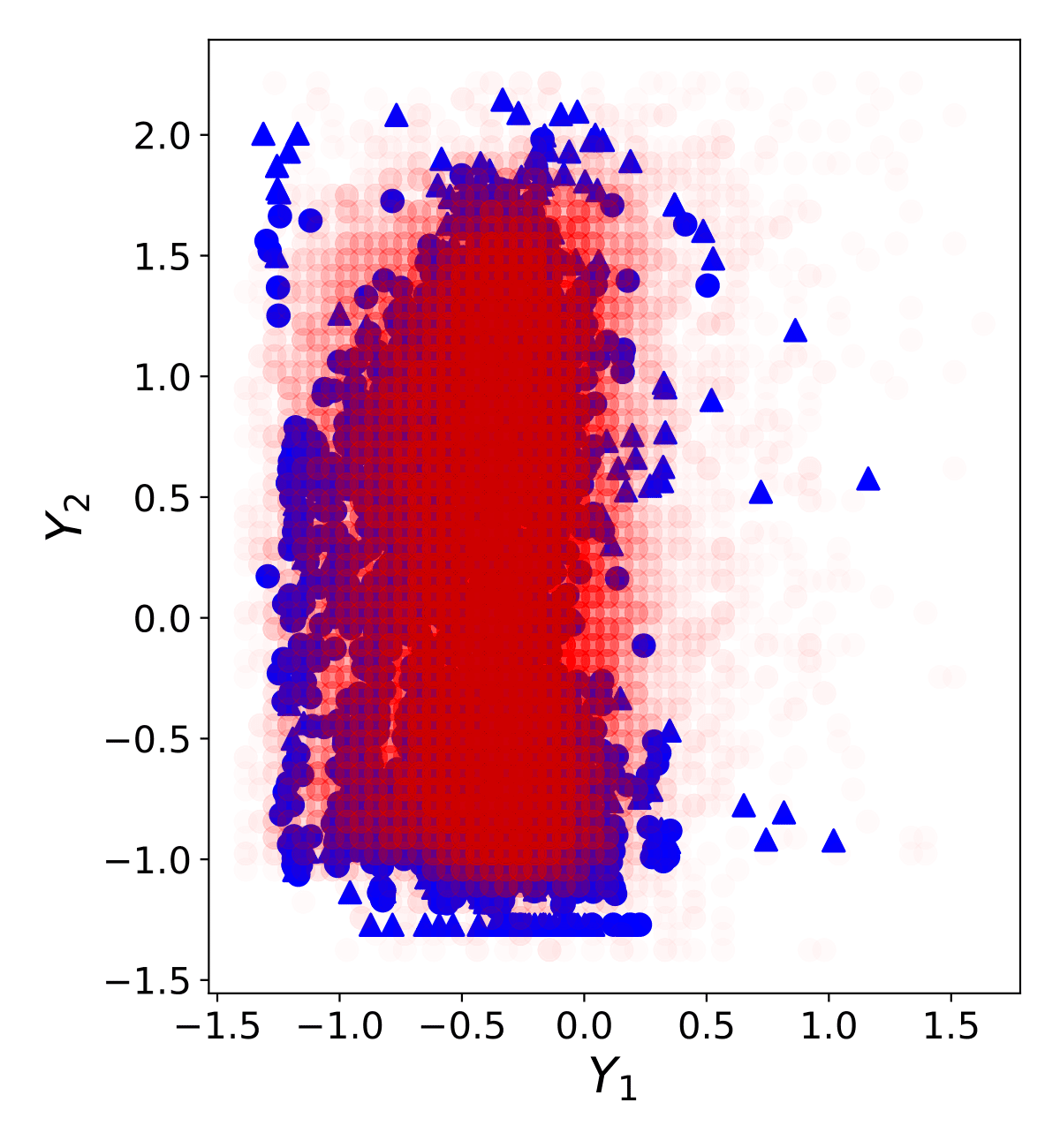}} & 
    {\includegraphics[width=0.3\columnwidth]{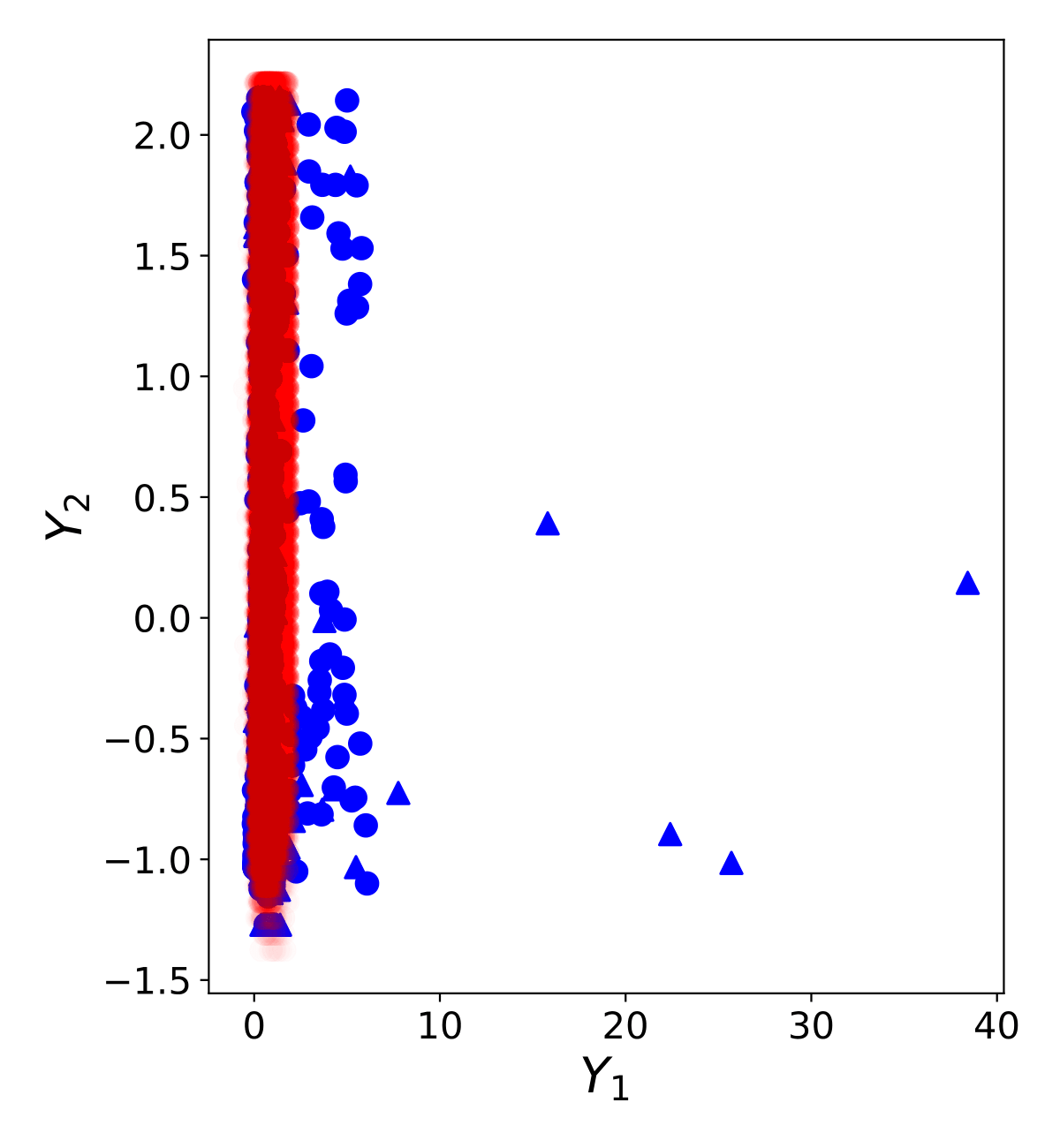}} & 
    {\includegraphics[width=0.3\columnwidth]{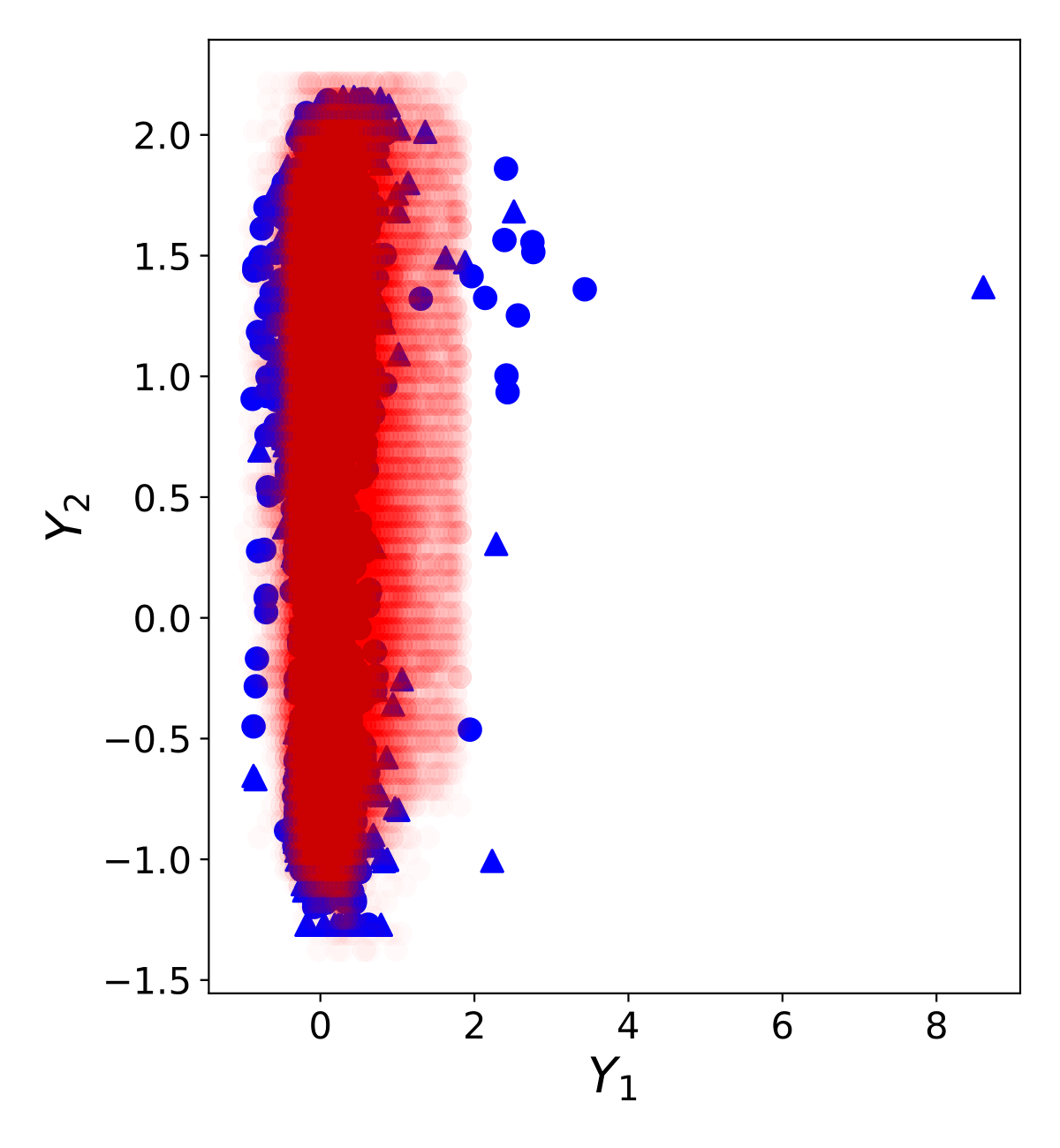}} \\
    
    {\includegraphics[width=0.3\columnwidth]{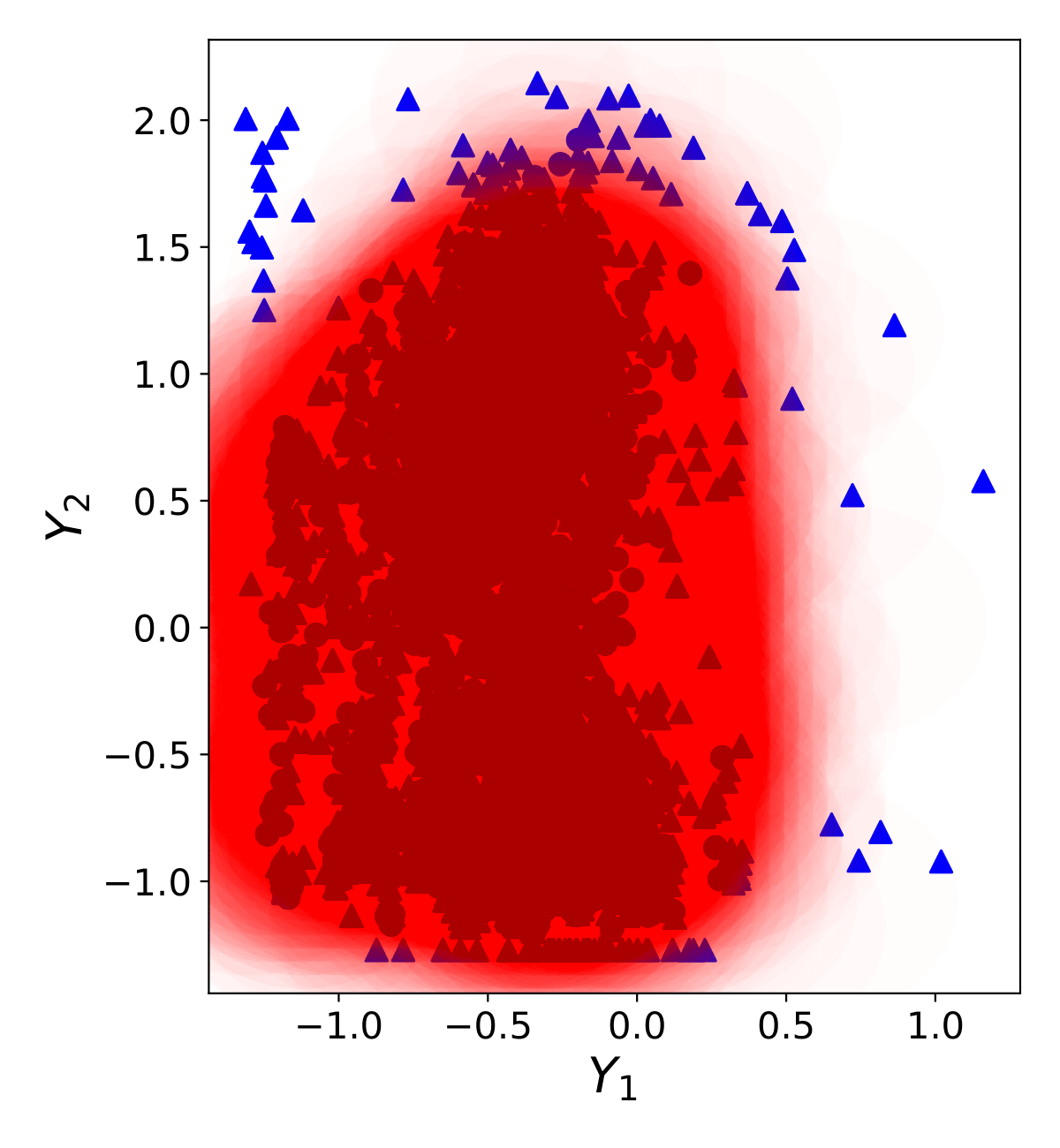}} & 
    {\includegraphics[width=0.3\columnwidth]{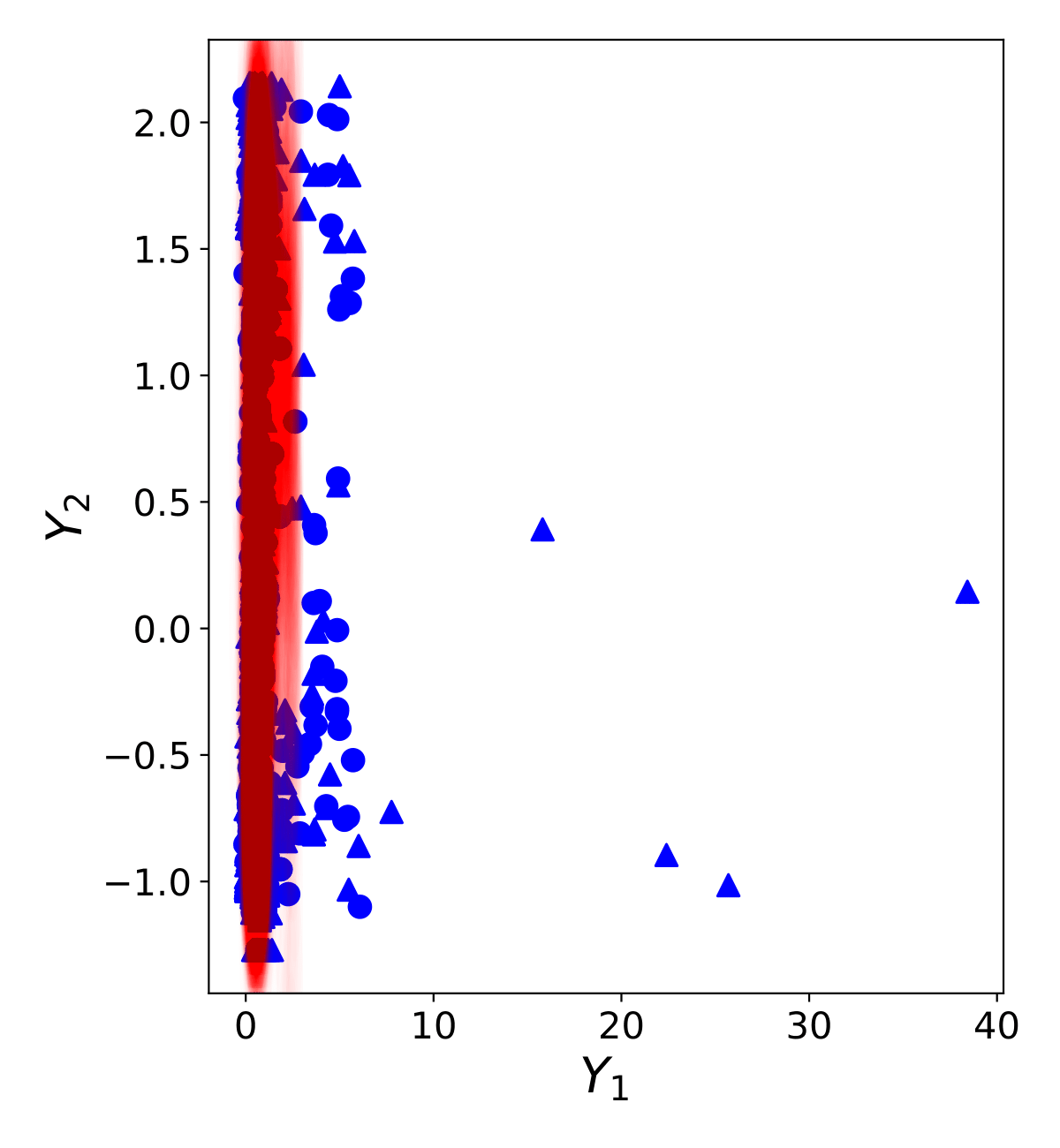}} & 
    {\includegraphics[width=0.3\columnwidth]{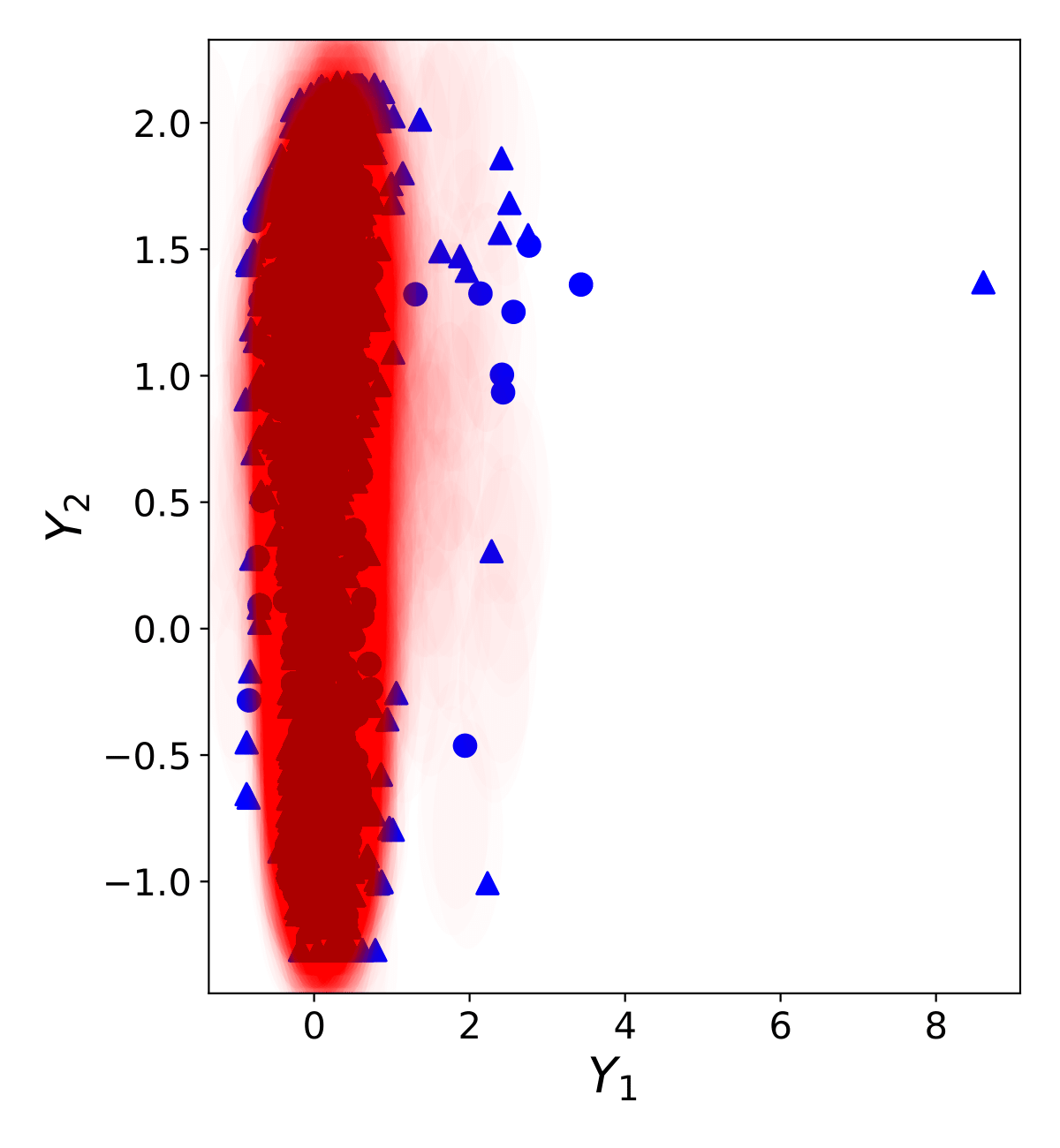}} \\
    
    {\includegraphics[width=0.3\columnwidth]{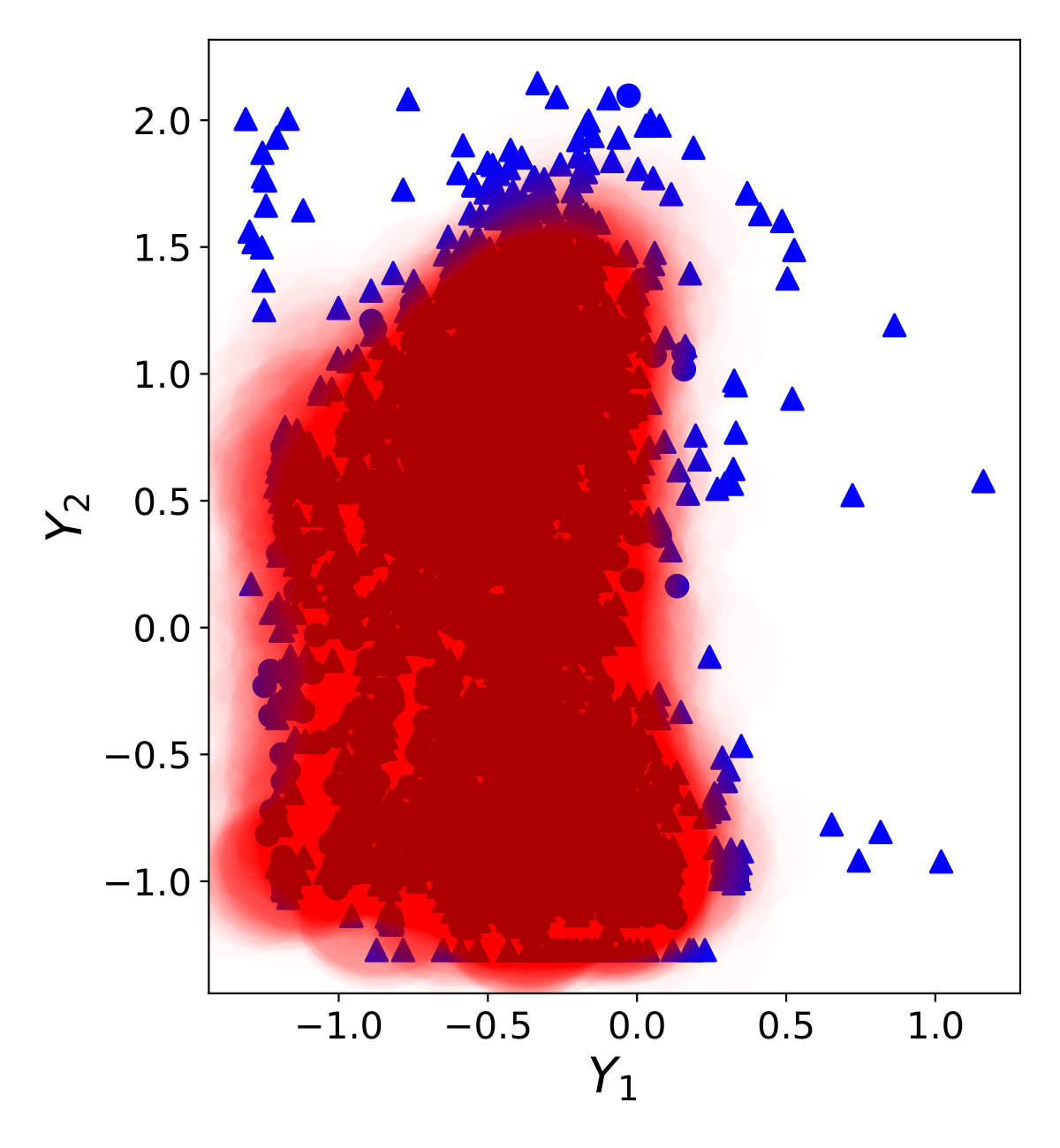}} & 
    {\includegraphics[width=0.3\columnwidth]{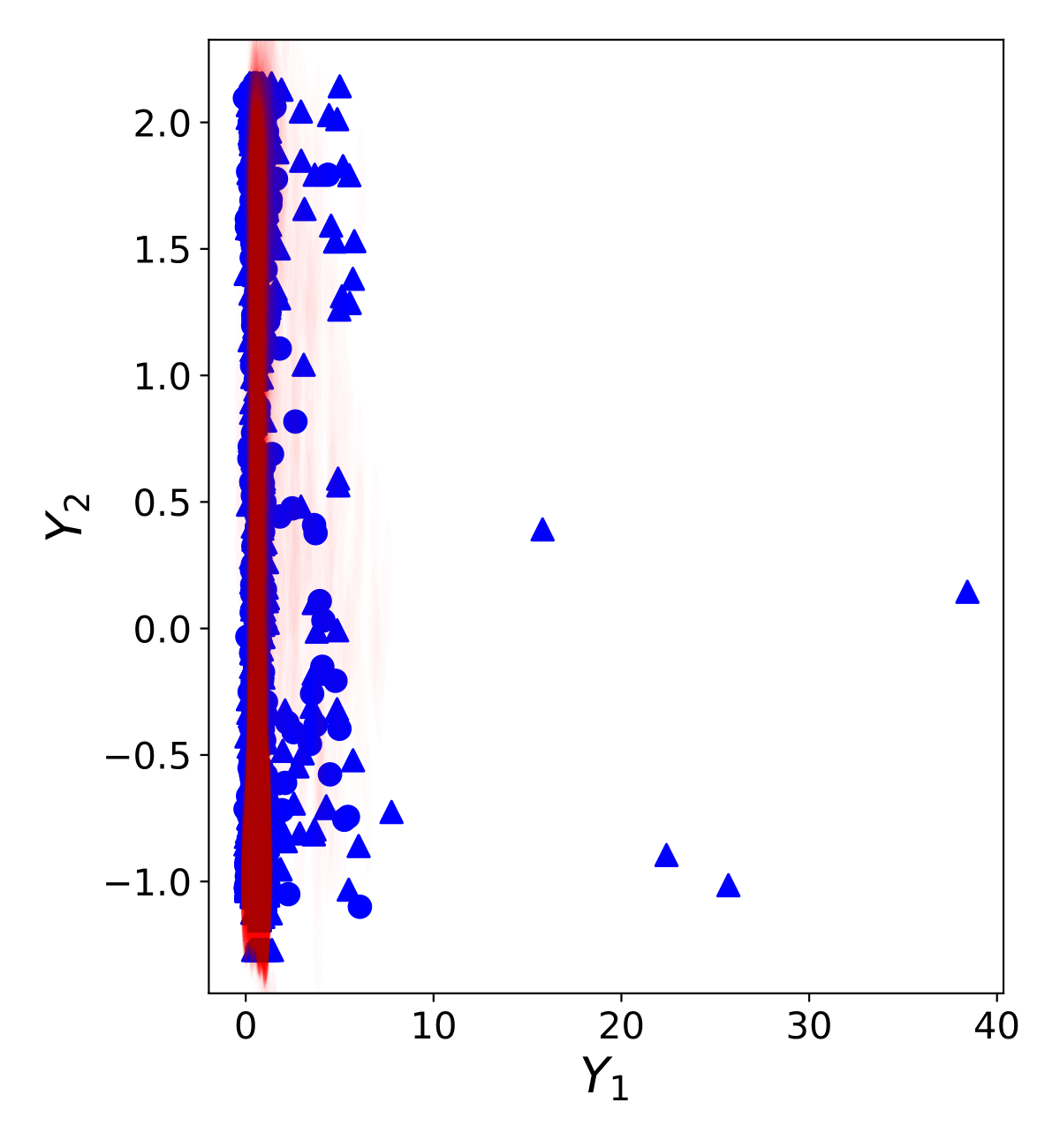}} & 
    {\includegraphics[width=0.3\columnwidth]{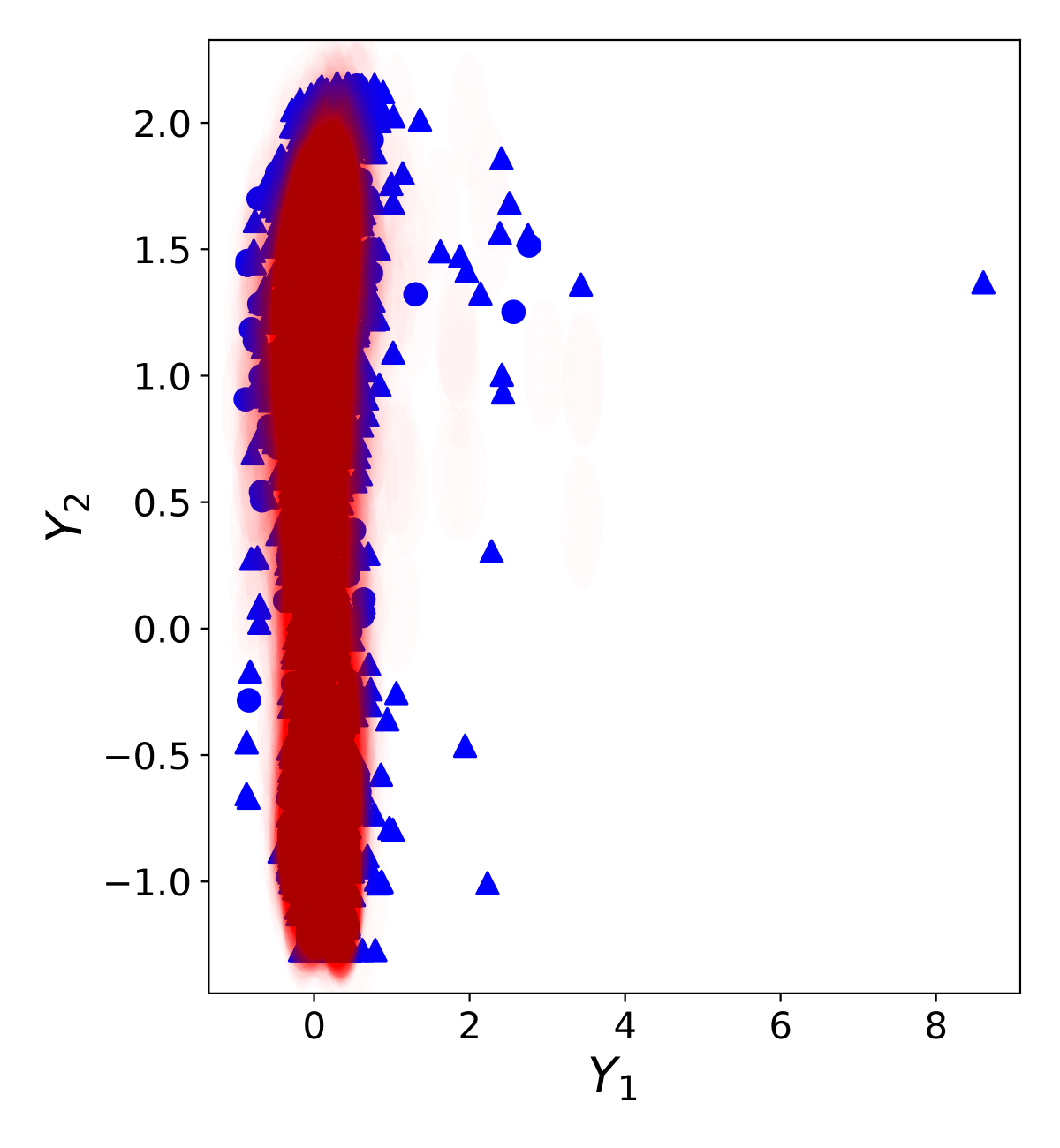}} \\
    
    {\includegraphics[width=0.3\columnwidth]{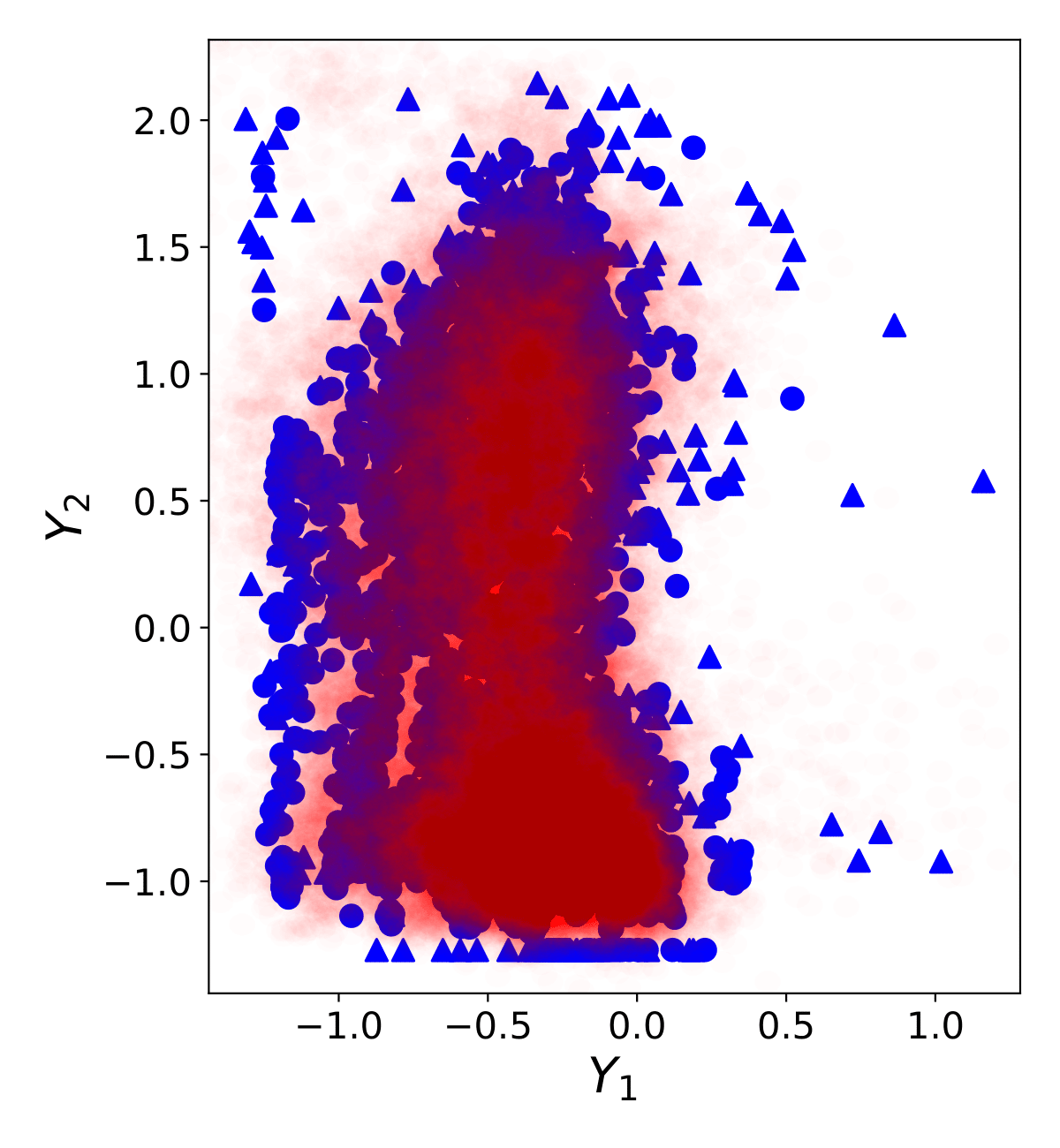}} & 
    {\includegraphics[width=0.3\columnwidth]{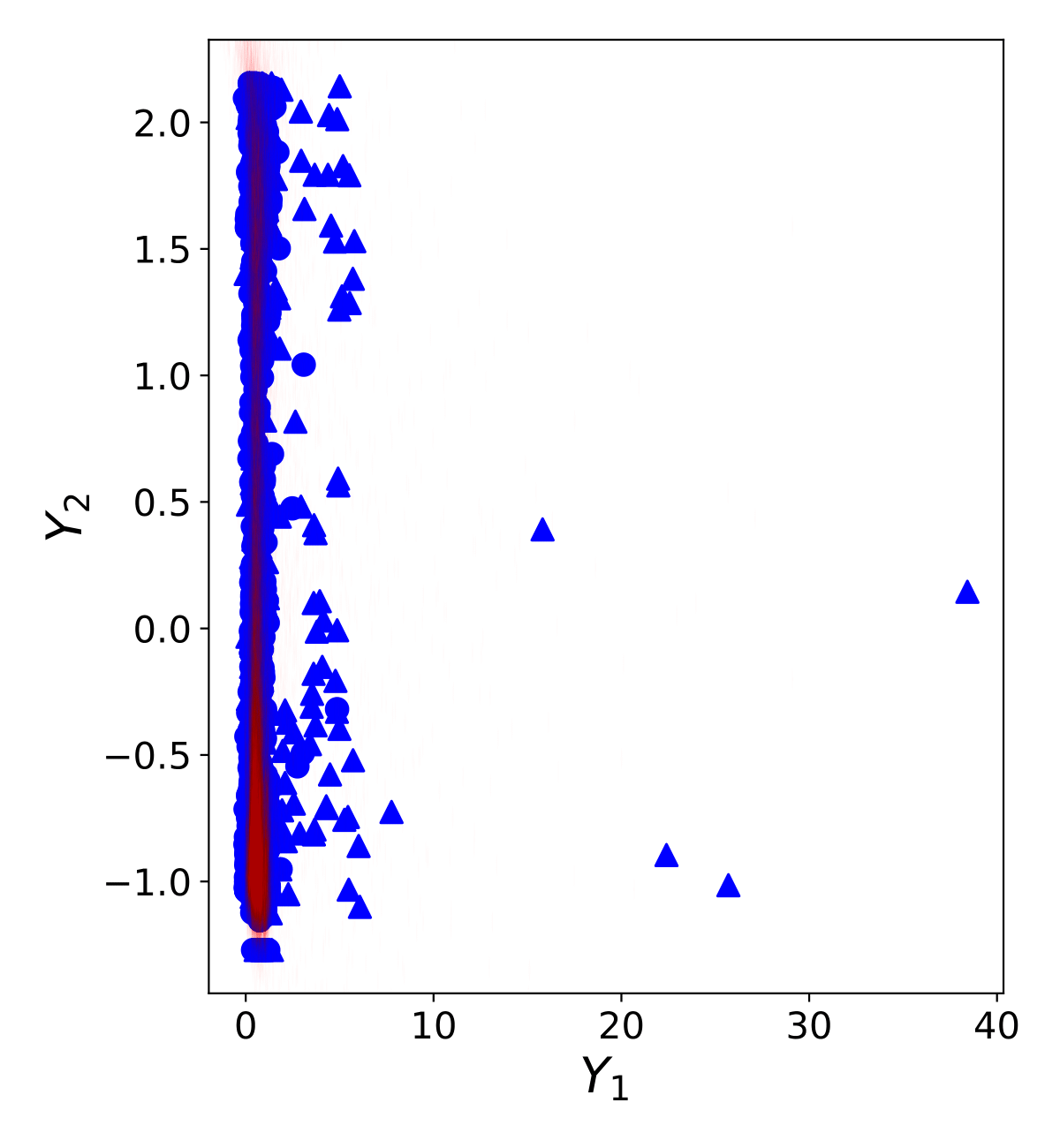}} & 
    {\includegraphics[width=0.3\columnwidth]{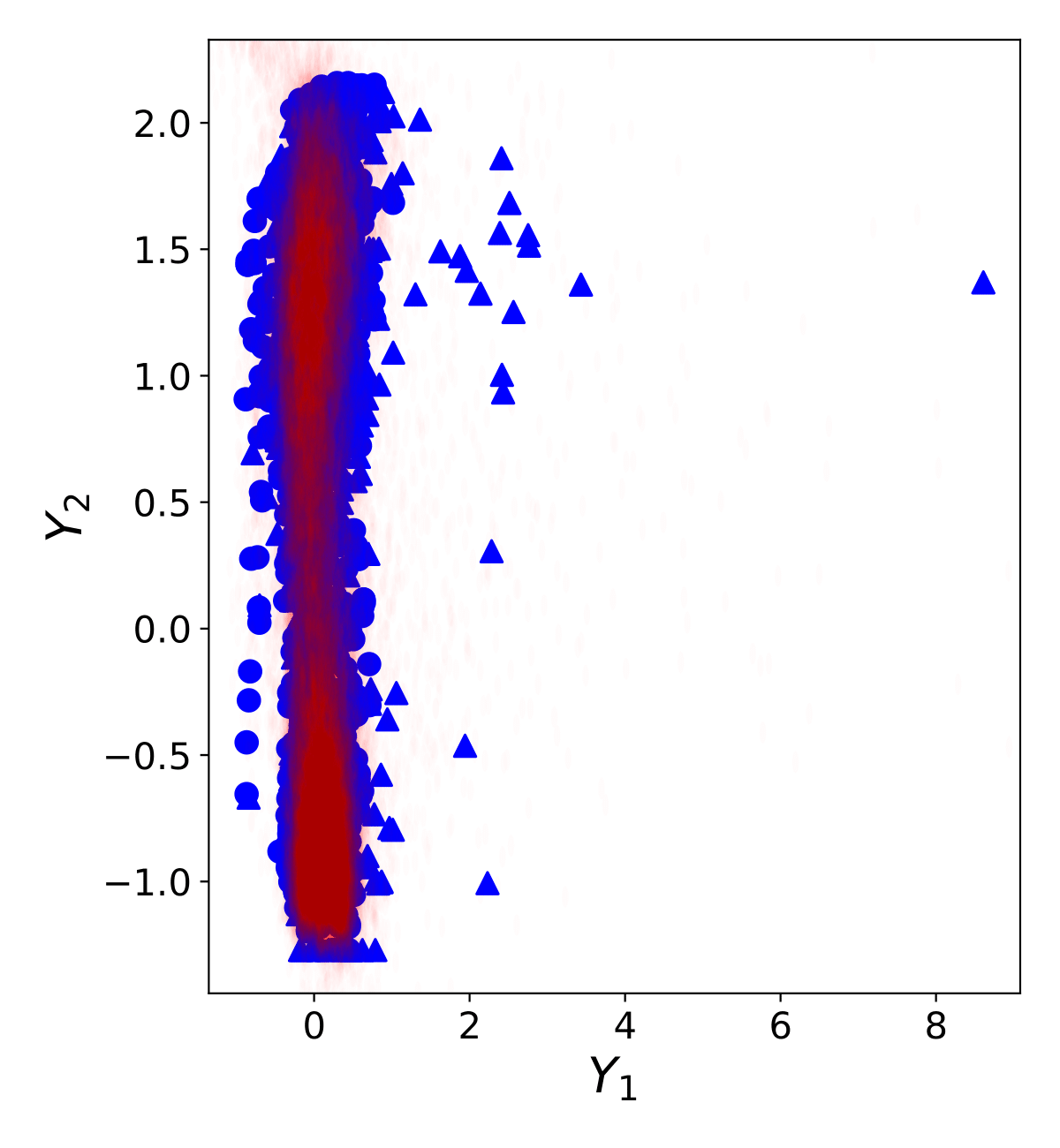}} \\
  \end{tabular}%
  
  \caption{Comparison of prediction regions for the Bio dataset using four methods: \text{Naïve QR}, \text{NPDQR}, \text{ST-DQR}, and our proposed \text{VSPS}. The dataset is partitioned into three clusters using K-Means. Circles represent covered points, while triangles indicate miscovered points. Each row corresponds to a different method, with \text{VSPS} demonstrating the most compact prediction regions across all clusters while maintaining the $1-\alpha$ coverage. The Bio dataset comprises two protein structural features ($Y_0$ and $Y_1$) as response variables and an 8-dimensional protein feature vector ($X$) as predictors.}

  \label{fig:bio_data_results}
\end{figure}

Additionally, Figure~\ref{fig:bio_data_results} visualizes the predictive regions for select datasets, illustrating that \VSPS{} captures the underlying distribution more effectively with smaller regions compared to \text{ST-DQR}, \text{NPDQR}, and \text{Naïve QR}.

The proposed \VSPS{} method demonstrates optimal efficiency and conditional coverage. This superior performance can be attributed to the adaptive nature of \VSPS{}, which allows it to adjust the number of selected samples and the ball radius based on local data characteristics. The use of CNFs enables accurate modeling of complex conditional distributions, while the calibration procedure ensures valid coverage while minimizing prediction set size.

\section{Conclusion}
\label{sec:conclusion}

In this work, we introduced the Volume-Sorted Prediction Set (\VSPS{}) method, a novel approach for constructing non-parametric and flexible quantile regions of arbitrary shapes in multi-target regression tasks. Leveraging conditional normalizing flows, \VSPS{} captures complex dependencies between input features and multiple output variables, enabling more informative and efficient prediction sets. Our key contributions include a novel algorithm for identifying high-density areas, a conformal calibration procedure guaranteeing pre-specified coverage levels, and extensive empirical evaluation demonstrating \VSPS{}'s superiority over existing techniques such as ST-DQR, NPDQR, and Naïve QR. Experiments on both synthetic and real-world datasets showcased \VSPS{}'s ability to generate informative and efficient quantile regions for multi-target data, with particularly significant improvements in scenarios involving complex, non-linear relationships.

\end{document}